\documentclass{article}

\usepackage{microtype}
\usepackage{graphicx}
\usepackage{subcaption}
\usepackage{booktabs} 

\usepackage{hyperref}

\usepackage[preprint]{icml2026}

\usepackage{amsmath}
\usepackage{amssymb}
\usepackage{mathtools}
\usepackage{amsthm}
\usepackage{natbib} 
\usepackage{mathtools, amsfonts}
\usepackage{multicol}
\usepackage{caption}
\usepackage{subcaption}
\usepackage{booktabs}  
\usepackage{makecell}  
\usepackage{graphicx}  

\usepackage[capitalize,noabbrev]{cleveref}

\theoremstyle{plain}
\newtheorem{theorem}{Theorem}[section]

\theoremstyle{definition}

\theoremstyle{remark}
\newtheorem{remark}[theorem]{Remark}
\newtheorem{prop}{Proposition}[section]

\newcommand{\vecX}{X}
\newcommand{\vecY}{Y}
\newcommand{\vecZ}{Z}

\newcommand{\best}[1]{\mathbf{#1}}      
\newcommand{\second}[1]{\underline{#1}} 
\newcommand{\metric}[3]{\ensuremath{#3{#1} \pm #2}}

\newcommand{\metricplain}[2]{\metric{#1}{#2}{}}
\newcommand{\metricbest}[2]{\metric{#1}{#2}{\best}}
\newcommand{\metricsecond}[2]{\metric{#1}{#2}{\second}}

\usepackage[textsize=tiny]{todonotes}

\begin{document}

\twocolumn[
    \icmltitle{ASK-NN: An Asymmetric Nearest-Neighbor Test that detects Distribution Drifts in Natural Language}



  \icmlsetsymbol{equal}{*}

  \begin{icmlauthorlist}
    \icmlauthor{Sergey Zakharov}{equal,yyy}
    \icmlauthor{Rodion Oblovatny}{equal,yyy,sch}
    \icmlauthor{Alexey Zaytsev}{comp,sch}
  \end{icmlauthorlist}

  \icmlaffiliation{yyy}{Saint-Peterburg university, Saint-Peterburg, Russia}
  \icmlaffiliation{comp}{Applied AI Institute, Moscow, Russia}
  \icmlaffiliation{sch}{Risk AI Research Lab, Moscow, Russia}

  \icmlcorrespondingauthor{Firstname1 Lastname1}{first1.last1@xxx.edu}
  \icmlcorrespondingauthor{Firstname2 Lastname2}{first2.last2@www.uk}

  \icmlkeywords{Machine Learning, ICML}

  \vskip 0.3in
]



\printAffiliationsAndNotice{}  

\begin{abstract}
Hallucinations and artificial text in LLM-generated outputs often appear as distributional deviations between prompt and response hidden-state distributions.
Since prompts or retrieved contexts typically serve as reference samples and responses as query samples, with major differences in length, these asymmetries motivate the use of change test statistics that treat the two samples differently.
We consider an asymmetric two-sample test ASK-NN based on the directed k-nearest-neighbor (k-NN) graph.
Our statistic counts reference points whose nearest neighbor in the pooled sample is also a reference point.
Under the permutation null, it admits an exact finite-sample conditional mean and variance; we further establish asymptotic normality and consistency under fixed alternatives.
ASK-NN is computationally effective and easy to implement.
Empirically, it is competitive with kernel and graph-based baselines on synthetic benchmarks, artificial-text detection, and LLM hallucination detection from token-level hidden states.
\footnote{Code is available at: https://anonymous.4open.science/r/asymmetrical-multivariate-two-sample-test-21F0/}
\end{abstract}

\section{Introduction}
Large language models (LLMs) are increasingly deployed in various applications with extended usage in decision-support pipelines~\cite{liu2024deepseek}.
Due to limited memory and information constraints in the training data end date, they are often complemented with retrieval-augmented generation (RAG) systems~\cite{lewis2020retrieval,fan2024survey}, where the model is expected to produce responses \emph{faithful} to the evidence.
A core obstacle to reliability is \emph{hallucination}: the model may generate statements that are unsupported by, or contradict, the given context~\cite{zhang2025siren}.

Detection approaches emphasize that hallucinations in RAG can be exposed by measuring the \emph{mismatch} between the context and the generation using internal model signals~\cite{ricco2025hallucination}.
For example, probabilistic distances between prompt and response hidden-state distributions (e.g., MMD with dependence-aware resampling) provide a training-free score that increases under hallucination, and can be made comparable across variable-length sequences via p-values computed from an estimated null distribution~\cite{mmdhalludetection}.
Another line of work analyzes attention-map structure~\cite{chuang2024lookback} and reports that certain attention heads exhibit robust hallucination signatures across datasets, which can be summarized by a topological divergence on attention-induced graphs~\cite{bazarova2025toha}.

While these methods account for diverse patterns encountered by an LLM, they mostly ignore the internal structure for most tasks.
One typically has a \emph{reference} distribution (e.g., embeddings of retrieved evidence or a trusted prompt) and a \emph{query} distribution (embeddings of the generated response).
Crucially, the operational goal is often \emph{asymmetric}: we would like to tightly control false alarms when the response is well grounded in the reference, while remaining sensitive to deviations that, e.g., indicate hallucination.
Asymmetry also arises when the length of the prompt and the generated response can be significantly different, making most approaches inefficient in the presence of a large context.
This asymmetry is not fully captured by classical symmetric two-sample tests, which treat both samples~interchangeably.

In this paper, we formalize hallucination detection and the artificial text detection problem as an asymmetric two-sample testing problem and develop a nearest-neighbor (NN) graph test tailored to this regime.
Nearest-neighbor methods are attractive in high dimensions because they avoid density estimation and rely only on local geometry.
The classical nonparametric NN coincidence test of Henze counts how often a point and its NN belong to the same sample across both groups, and is symmetric by design with provable strong statistical properties~\cite{henze1988multivariate}.
We propose and analyze a one-sided variant that counts NN coincidences \emph{only within the reference sample}.
This yields a statistic that (i) aligns with the reference-vs-query structure of hallucination detection, (ii) admits an exact finite-sample conditional mean and variance under the permutation null (given the pooled locations), and (iii) retains asymptotic normality and consistency properties analogous to the symmetric case.

Our work adopts a \emph{reference-sided} counting rule, focusing only on within-reference NN coincidences.
This asymmetric construction better matches operational settings such as hallucination detection, where one sample plays the role of an anchor distribution, and yields a statistic with tractable conditional moments and analogous large-sample behavior.
Our theoretical analysis closely follows the framework of~\citet{henze1988multivariate}, while adapting the statistic, null moments, asymptotic calibration, and consistency argument to the asymmetric one-sided setting.
We further demonstrate how such asymmetric tests can be applied to real-world LLM generation problems using hidden-state representations.

Our contributions are as follows:
\begin{itemize}
    \item \textbf{An asymmetric two-sample statistic ASK-NN.} We introduce an asymmetric two-sample statistic ASK-NN based on within-reference coincidences in the k-NN graph, motivated by reference-query settings where one sample provides an anchor distribution.
    \item \textbf{Established theoretical properties for ASK-NN.} For ASK-NN statistic, we derive exact conditional mean and variance under the permutation null and obtain an asymptotically normal calibration. Our theoretical contributions continue with characterization of the limiting separation functional under fixed alternatives and show that it is minimized if and only if the two distributions coincide, implying consistency of the resulting test.
    \item \textbf{Competitive empirical performance of ASK-NN.} We evaluate the method on synthetic Gaussian benchmarks, artificial-text detection, and LLM hallucination detection, showing competitive performance against MMD, Sinkhorn, Hotelling’s \(T^2\), and symmetric k-NN~baselines.
\end{itemize}

\section{Related works}

\paragraph{Graph-based and nearest-neighbor tests.}
Graph-based tests form a class of nonparametric two-sample tests. They rely only on the geometric structure of the pooled sample, rather than on a parametric model for the underlying distributions. In particular, they construct a graph on pooled observations, for example using nearest-neighbor graphs or minimum spanning trees (MSTs), and measure how frequently edges connect points from the same sample.
Among the well-known examples are the Friedman–Rafsky test, based on the MST~\cite{friedman1979multivariate}, as well as nearest-neighbor tests, such as the Henze nearest-neighbor coincidence test~\cite{henze1988multivariate}. 
The latter counts the number of nearest-neighbor pairs within the samples symmetrically across both samples and establishes asymptotic normality under the null hypothesis and consistency under alternative hypotheses.

\paragraph{Two-sample testing and kernel methods.}
The two-sample problem has a long history, with modern nonparametric methods including kernel-based tests and graph-based tests.
Maximum Mean Discrepancy (MMD) is a widely used kernel two-sample statistic with strong theoretical guarantees and practical performance~\cite{gretton2012kernel}. 
For dependent data, wild-bootstrap variants provide valid calibration for degenerate kernel tests~\cite{chwialkowski2014wild}.
Connectivity between two samples is also explored in ~\cite{friedman1979multivariate,henze1999multivariate}, where the authors consider the number of edges in the minimum spanning tree connecting nodes from different samples.
Similar ideas appear in works~\cite{barannikov2021manifold}, but from a topological perspective, while with proved statistical properties~\cite{mironenko2026density}.
These methods are directly relevant to token-sequence embeddings in LLMs, where strong local dependence violates i.i.d.\ assumptions.

\paragraph{Hallucination detection via internal signals.}
A growing body of work detects hallucinations without external knowledge by extracting model-internal uncertainty or consistency cues, e.g., self-consistency over multiple generations~\citep{farquhar2024detecting}, entropy-based confidence~\citep{fadeeva2024fact}, and probes over hidden states~\citep{sky2024androids}.
In RAG, a particularly effective principle is to directly quantify the relationship between the provided context and the generated response~\citep{es2024ragas}.
One approach measures probabilistic distances between prompt and response hidden-state distributions and uses dependence-aware resampling (e.g., wild bootstrap) to compute calibrated p-values, yielding length-normalized hallucination scores~\citep{mmdhalludetection}.
In parallel, TOHA analyzes graphs induced by attention matrices and uses an asymmetric topological divergence between prompt and response subgraphs, reporting that a small subset of attention heads yields stable hallucination signatures across datasets and models~\citep{bazarova2025toha}.

\paragraph{Artificial-text detection.}
Machine-generated text (MGT) detection aims to distinguish human-written texts from outputs of large language models. This task has become increasingly challenging because modern LLMs generate fluent, high-quality text, making the distributional differences between human and generated writing subtle.

Existing methods include metric-based detectors based on language-model statistics such as likelihood, entropy, rank or perturbation-based scores \citep{gehrmann2019gltr, mitchell2023detectgpt}, model-based classifiers trained to separate human and generated examples
\citep{guo2023close}, and distributional approaches that explicitly view detection through the lens of statistical two-sample problem. In particular, recent work uses MMD-based criteria not only to measure discrepancies between human-written and machine-generated texts, but also to optimize detectors that are sensitive to such distributional differences \citep{zhangs2024MMDMP}.

We use this domain as a source of real-world data, where human-written and LLM-generated texts form two closely related distributions, providing a useful benchmark for evaluating the sensitivity of our two-sample statistic.

\section{Methods}

\subsection{Hypothesis testing problem}
Let $\vecX_1, \ldots, \vecX_{n_1}, \vecY_1, \ldots, \vecY_{n_2}$ be independent $\mathbb{R}^d$-valued random vectors with the dimension $d \ge 1$. Throughout the paper, the \(X\)-sample is treated as the reference sample and the \(Y\)-sample as the query or potentially shifted sample.
The distribution of $X_i$ has an unknown probability density function $f(x)$, and $Y_i$ has an unknown probability density function $g(x)$. 
We consider the two-sample testing problem $H_0: f \equiv g$ against $H_1: f \not\equiv g$.

\subsection{Nearest-neighbour based statistic}

Let 
\[
\vecZ_i = 
  \begin{cases}
    \vecX_i, & 1 \le i \le n_1, \\
    \vecY_{i - n_1}, & n_1 + 1 \le i \le n,
   \end{cases}
\]
where $n = n_1 + n_2$.
Then, for the pooled sample $\{Z_i\}$, we define $N_r(Z_i)$ to be the $r$-th nearest neighbor of $Z_i$ in the full sample, and $A(Z_i) = I(1 \le i \le n_1)$ is the indicator of the first group of points.

The statistic ASK-NN to be studied is:
\begin{equation}
    \label{eq:T_n_def}
    T_n = \sum_{i = 1}^{n} \sum_{r=1}^k A(Z_i) A(N_r(Z_i)). 
\end{equation}
Thus, \(T_n\) counts the number of reference points whose $k$ nearest neighbors in the pooled sample are also reference points, as it equals $\sum_{i = 1}^{n_1} \sum_{r=1}^k A(N_r(Z_i))$. 
Our goal is to examine the properties of $T_{n}$ and compare them with those of the version proposed by Henze $T^{\mathrm{full}}_n = \sum_{i = 1}^{n} \sum_{r=1}^k A(Z_i) A(N_r(Z_i)) + (1 - A(Z_i))(1 - A(N_r(Z_i)))$.
We will show that $T_n$ has properties that are largely similar to those of the original, while requiring a smaller amount of computation, as evident from Table~\ref{tab:comp_cost} for~$n_1 \ll n$.

\begin{table}[t]
\centering
\small
\begin{tabular}{lc}
\toprule
Method & Time \\
\midrule
MST-based test~(Henze et al., 1999) & $O(n^2 d)$ \\
MMD~\cite{gretton2012kernel} & $O(n^2 d)$ \\
MTopDiv~\cite{mironenko2026density} & $O(n^2 (d + \log n))$ \\
$T_n^{\mathrm{full}}$~\cite{henze1988multivariate} & $O(n (k + n d))$ \\
$T_n$ (ours) & $O(n_1 (k + n d))$ \\
\bottomrule
\end{tabular}
\caption{Computational costs for different nonparametric two-sample statistics.}
\label{tab:comp_cost}
\end{table}

\subsection{Distribution under $H_0$}

If $H_0$ holds, we can represent the random sequence $Z_1, \ldots, Z_n$ using the permutation distribution, similarly to~\citet{henze1988multivariate}. 
$Z_1, \ldots, Z_n$ are i.i.d random $d$-dimensional vectors.
Additional random variables $U_{n1}, \ldots, U_{nn}$ have distribution:
\begin{multline*}
    \mathbb{P}(U_{nj} = u_j; 1 \le j \le n) = \\
    = \begin{cases}
        \binom{n}{n_1}^{-1}, & \text{if} \sum_{i=1}^{n} I(u_i = 1) = n_1, \\
        0, & \text{otherwise}.
    \end{cases},
\end{multline*}
where $u_j \in \{1, 2\}$ are all possible values of $U_{nj}$. 
These variables represent the sample type of $Z_j$: $X$ or $Y$-type correspondingly. For $1 \le i \neq j \le n, 1 \le r \le k$ we introduce events
\begin{equation*}
    A_{ij}^{(r)} = [Z_j = N_r(Z_i)],
\end{equation*}
\begin{equation*}
    B_j = [U_{nj} = 1].
\end{equation*}

Here we study only one sample type ($U_{nj} = 1$), therefore, the test becomes asymmetric.

Under $H_0$, $T_n$ defined in \eqref{eq:T_n_def} has the same distribution as
\begin{equation*}
    \tilde{T}_n = \sum_{i,j = 1}^{n} \sum_{r=1}^k I\left(B_i\right) I\left(B_j\right) I\left(A_{ij}^{(r)}\right).
\end{equation*}
Given $Z_i = z_i$ we denote:
\begin{equation*}
    a_{ij}^{(r)} = a_{ij}^{(r)}(z_1, \ldots, z_n) = I(A_{ij}^{(r)} | Z_1 = z_1, \ldots, Z_n = z_n).
\end{equation*}
\begin{equation*}
    a_{ij}^+ = \sum_{r=1}^k a_{ij}^{(r)} \quad (a_{ij}^+ \in \{0, 1\})
\end{equation*}
All $a_{ij}^+$ together form an adjacency matrix of the k-nearest-neighbour graph on points $z_1, \ldots, z_n$.
\begin{equation*}
    L_n = \sum_{i, j = 1}^{n} a_{ij}^+ I(B_i) I(B_j).
\end{equation*}

For convenience, let $d_j^{(k)} = \sum_{i=1}^{n}a_{ij}^+$, $c_n^{(k)} = \frac{1}{nk}\sum_{j = 1}^{n}\left(d_j - k\right)^2$, $v_n^{(k)} = \frac{1}{nk}\sum_{i, j = 1}^{n}a_{ij}^+a_{ji}^+$ (same notation is used in~\cite{henze1988multivariate}). Now we can formulate the one-sided analog of Proposition 2.1 from~\cite{henze1988multivariate}:
\begin{prop}\label{prop:L_moments} For the random variable $L_n$ it holds that
    \begin{equation}\label{eq:E_L}
        \mathbb{E}[L_n] = k\frac{n_1(n_1 - 1)}{n - 1},
    \end{equation}
    \begin{multline}\label{eq:Var_L}
        \mathbb{V}[L_n] = k\frac{n_1 (n_1 - 1) n_2}{(n-1) (n-2) (n-3)} \times \\
        \times \left((n_1 - 2) c_n^{(k)} + (n_2 - 1) \left(1 + v_n^{(k)} - \frac{2k}{n - 1}\right)\right).
    \end{multline}    
\end{prop}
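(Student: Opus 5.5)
The plan is to compute the first two moments of $L_n$ directly from the exchangeable structure of the labels $U_{n1},\dots,U_{nn}$, with the configuration $z_1,\dots,z_n$ (hence the $a_{ij}^+$) held fixed. This mirrors the proof of Proposition 2.1 in \citet{henze1988multivariate}, except that only the $B$-events enter. Since $(U_{nj})$ is uniform over label vectors with exactly $n_1$ ones, for distinct indices the joint probabilities are falling-factorial ratios:
\[
p_2 = \frac{n_1(n_1-1)}{n(n-1)}, \qquad p_3 = \frac{n_1(n_1-1)(n_1-2)}{n(n-1)(n-2)}, \qquad p_4 = \frac{n_1(n_1-1)(n_1-2)(n_1-3)}{n(n-1)(n-2)(n-3)}.
\]
These are the probabilities that $2$, $3$ or $4$ prescribed distinct points all carry label $1$.

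For the mean, note that $a_{ii}^+=0$ and $\sum_{i,j} a_{ij}^+ = nk$, because every point has exactly $k$ out-neighbours. Linearity then gives $\mathbb{E}[L_n] = nk\,p_2 = k\,n_1(n_1-1)/(n-1)$, which is \eqref{eq:E_L}.

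For the variance, I will expand
\[
\mathbb{E}[L_n^2] = \sum_{i,j,l,m} a_{ij}^+ a_{lm}^+\, \mathbb{P}(B_i B_j B_l B_m)
\]
and split the index quadruples according to the size of $\{i,j,l,m\}$. Throughout I use $(a_{ij}^+)^2 = a_{ij}^+$ and the fact that the in-degrees $d_j = d_j^{(k)}$ have mean $k$.

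\emph{Two distinct indices} (probability $p_2$). The case $(l,m)=(i,j)$ contributes $nk$, and the case $(l,m)=(j,i)$ contributes $\sum a_{ij}^+a_{ji}^+ = nk\,v_n^{(k)}$.

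\emph{Three distinct indices} (probability $p_3$). There are four patterns:
\begin{itemize}
\item a common target $j=m$, $i\neq l$, contributing $\sum_j d_j(d_j-1)$;
\item a common source $i=l$, $j\neq m$, contributing $nk(k-1)$;
\item the two chain patterns $j=l$ and $i=m$, each contributing $\sum_j d_j k - nk\,v_n^{(k)}$, where the subtraction removes the $2$-cycles already counted.
\end{itemize}

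\emph{Four distinct indices} (probability $p_4$). These contribute $(nk)^2$ minus the total of all the counts above.

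Next I will rewrite $\sum_j d_j^2 = nk\,c_n^{(k)} + nk^2$ and $\sum_j d_j k = nk^2$. With these substitutions, $\mathbb{E}[L_n^2]$ becomes an explicit expression in $n, n_1, k, c_n^{(k)}, v_n^{(k)}$. Subtracting $\mathbb{E}[L_n]^2 = (nk\,p_2)^2$ then gives the variance.

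The main obstacle is the final algebraic simplification. The $(nk)^2$ terms must cancel between $p_4(nk)^2$ and $(nk p_2)^2$, leaving a common factor $n_1(n_1-1)n_2/((n-1)(n-2)(n-3))$. To organise this, I will group the coefficients by $c_n^{(k)}$, by $v_n^{(k)}$, and by the constant terms, and use the identities
\[
p_3 - p_4 = p_3\,\frac{n-n_1}{n-3}, \qquad p_2 - p_3 = p_2\,\frac{n-n_1}{n-2}.
\]
Both differences carry the factor $n_2 = n-n_1$, which is how the common factor emerges. After this grouping, the coefficient of $c_n^{(k)}$ should reduce to $(n_1-2)$. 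The remaining terms should collect into $(n_2-1)\bigl(1+v_n^{(k)}-2k/(n-1)\bigr)$, yielding \eqref{eq:Var_L}. As a sanity check, I will verify the final formula against the case $n_1=n$, where $L_n\equiv nk$ and the variance must vanish. This case is consistent with the formula, since the prefactor contains $n_2 = 0$.
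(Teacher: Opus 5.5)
Your proposal is correct, but it takes a different route from the paper. The paper computes nothing by hand. It symmetrizes via $m_{ij}=a_{ij}^+ + a_{ji}^+$ and writes $L_n = Q/2$ with $Q=\sum_{i,j} m_{ij}I(B_i)I(B_j)$. It then reads off both moments from Bloemena's general formulas (3.3.1) and (3.3.8) for the within-sample weight of a symmetric weighted graph under sampling without replacement. The only work is the dictionary $m_{i+}=k+d_i^{(k)}$, $m_{++}=2nk$, $\sum_i(m_{i+}-m_{++}/n)^2 = nk\,c_n^{(k)}$, $\sum_{i,j}m_{ij}^2 = 2nk(1+v_n^{(k)})$.

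You instead re-derive this special case of Bloemena's result directly for the directed graph, by classifying index quadruples by the number of distinct indices. Your counts are right: $S_2 = nk(1+v_n^{(k)})$ and $S_3 = nk\,c_n^{(k)} + 4nk^2 - 2nk - 2nk\,v_n^{(k)}$. With $\mathbb{V}[L_n] = (nk)^2(p_4-p_2^2) + (p_2-p_4)S_2 + (p_3-p_4)S_3$ the algebra closes. The $c_n^{(k)}$ coefficient is $nk(p_3-p_4)$, which equals the prefactor times $(n_1-2)$. The coefficients of $v_n^{(k)}$ and of the constant term are both $nk(p_2-2p_3+p_4)$, which equals the prefactor times $(n_2-1)$.

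The paper's route is short but rests entirely on an external reference and on the correctness of the notation translation. Yours is self-contained and shows where each term comes from: in-degree dispersion from common-target pairs, $v_n^{(k)}$ from 2-cycles and the chain corrections, and the $-2k/(n-1)$ term from the $k^2$ bookkeeping. The cost is heavier algebra.

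One imprecision to fix when you write it out: the $(nk)^2$ terms do not literally cancel, since $p_4 \neq p_2^2$. Only their leading order cancels. The remainder $(nk)^2(p_4-p_2^2)$ is nonzero, of order $nk^2$, and carries the factor $n_2$. You must combine it with the $4nk^2(p_3-p_4)$ coming from $S_3$; together they give exactly $-2k^2\, n_1(n_1-1)n_2(n_2-1)/\bigl((n-1)^2(n-2)(n-3)\bigr)$, which is the $-2k/(n-1)$ term in \eqref{eq:Var_L}. If you discard $(nk)^2(p_4-p_2^2)$ as cancelling, the $k^2$ contribution comes out positive and the formula does not match.
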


\begin{proof}
    For the proof, we turn to the work~\cite{Bloemena1964SamplingGraph}. 
    Letting $m_{ij} = a_{ij}^+ + a_{ji}^+, Q = \sum_{i,j = 1}^{n} m_{ij} I(B_i) I(B_j)$, we have $L_n = \frac{1}{2} Q$. The variable $Q$ ($x_W$ or $x_B$) was studied by Bloemena (see def. (1.1.5)) and the assertion follows from (3.3.1) and (3.3.8), observing that, in their notation, $m_{i+} = k + d_i^{(k)}$, $m_{++} = 2nk$, $\sum_{i=1}^{n} (m_{i+} - (1/n) m_{++})^2 = nk c_n$ and $\sum_{i, j=1}^{n} m_{ij}^2 = 2nk (1 + v_n)$.
    So, we obtain the desired results by straightforwardly calculating the mean and variance in question.
\end{proof}

Now we derive a new asymptotic distribution under $H_0$.

\begin{prop}
    If $n \rightarrow \infty$ with $n_1/n \rightarrow \tau, 0 < \tau < 1$, then
    \begin{equation*}
        \mathbb{V} \left(n^{-1/2} \tilde{T_n} | Z_1, \ldots, Z_n\right) \rightarrow_{P_f} \sigma^2(\tau, d, |.|),
    \end{equation*}
    where
    \begin{equation*}
        \sigma^2(\tau, d, |.|) = k\tau^2(1-\tau)\left(\tau c_\infty^{(k)} + (1 - \tau)(1 + v_\infty^{(k)})\right),
    \end{equation*}
    $v_\infty^{(k)}$ and $c_\infty^{(k)}$ are defined in Propositions 3.1, 3.2 in~\citet{henze1988multivariate}.
\end{prop}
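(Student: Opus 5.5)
The plan is to reduce the statement to Proposition~\ref{prop:L_moments} together with the almost-sure (or in-probability) convergence of the graph functionals $c_n^{(k)}$ and $v_n^{(k)}$ established by Henze. Conditionally on $Z_1=z_1,\ldots,Z_n=z_n$, the only randomness in $\tilde T_n$ comes from the labels $U_{n1},\ldots,U_{nn}$. With the $z_i$ fixed, $\tilde T_n$ coincides with $L_n$ built from the $k$-NN adjacency $a_{ij}^+$ of these points. Hence $\mathbb{V}(\tilde T_n\mid Z_1,\ldots,Z_n)$ is exactly the right-hand side of \eqref{eq:Var_L}, evaluated at the random quantities $c_n^{(k)}=c_n^{(k)}(Z_1,\ldots,Z_n)$ and $v_n^{(k)}=v_n^{(k)}(Z_1,\ldots,Z_n)$. (Ties have probability zero since $f$ is a density, so the $k$-NN graph is a.s. well defined.)

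Next I would divide by $n$ and split \eqref{eq:Var_L} into a deterministic prefactor and a random bracket:
\[
n^{-1}\mathbb{V}(\tilde T_n\mid Z)=k\,\frac{n_1(n_1-1)n_2}{n(n-1)(n-2)(n-3)}\cdot n\cdot\Bigl(\tfrac{n_1-2}{n}c_n^{(k)}+\tfrac{n_2-1}{n}\bigl(1+v_n^{(k)}-\tfrac{2k}{n-1}\bigr)\Bigr).
\]
Under $n_1/n\to\tau$, the prefactor $n_1(n_1-1)n_2/((n-1)(n-2)(n-3))$ tends to $\tau^2(1-\tau)$. Also $(n_1-2)/n\to\tau$, $(n_2-1)/n\to 1-\tau$, and $2k/(n-1)\to0$. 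Bookkeeping of the powers of $n$ gives the order: the variance is $O(n)$, since $n^3\cdot n$ in the numerator against $n^3$ in the denominator leaves one factor of $n$, so dividing by $n$ yields an $O(1)$ limit.

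The substantive input is that $c_n^{(k)}\to c_\infty^{(k)}$ and $v_n^{(k)}\to v_\infty^{(k)}$ in $P_f$-probability when $Z_i$ are i.i.d.\ with density $f$. This is exactly the content of Propositions 3.1 and 3.2 of \citet{henze1988multivariate}, which the statement allows us to cite. There, $v_n^{(k)}$ is the normalized number of mutual $k$-NN pairs, and $c_n^{(k)}$ is the normalized empirical variance of the in-degrees. These limits depend only on $d$, $k$ and the norm, not on $f$, which is why $\sigma^2$ depends only on $(\tau,d,|\cdot|)$.

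Combining these limits by Slutsky's lemma, or the continuous mapping theorem applied to the finitely many convergent factors, gives the stated $\sigma^2(\tau,d,|\cdot|)$. The main obstacle is only whether Henze's convergence results for $c_n^{(k)}$ and $v_n^{(k)}$ are available in exactly the normalization used here; the paper's $1/(nk)$ scaling matches Henze's. If his result were stated for $k=1$ only, I would appeal to the $k$-NN extension in the same paper, or else to Henze's 1987 results on degree distributions of $k$-NN graphs. Everything else is exact algebra from Proposition~\ref{prop:L_moments}.
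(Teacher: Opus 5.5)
Your proposal is correct and is essentially the paper's own argument. You identify $\mathbb{V}(\tilde T_n\mid Z_1,\ldots,Z_n)$ with the right-hand side of \eqref{eq:Var_L} evaluated at $c_n^{(k)},v_n^{(k)}$, divide by $n$, and pass to the limit using $n_1/n\to\tau$ together with Henze's Propositions 3.1--3.2; your explicit bookkeeping (prefactor $\to\tau^2(1-\tau)$, bracket divided by $n$ $\to\tau c_\infty^{(k)}+(1-\tau)(1+v_\infty^{(k)})$, then Slutsky) simply spells out what the paper compresses into one line.
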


\begin{proof}
    The result follows immediately taking to the limit \eqref{eq:Var_L}, using Propositions 3.1, 3.2 in~\citet{henze1988multivariate} about $c_\infty^{(k)}$ and $v_\infty^{(k)}$.    
\end{proof}

Then, using the fact above (or using Theorem 4.1.1 from~\citet{Bloemena1964SamplingGraph} with notations given in the proof of Prop. \ref{prop:L_moments}), we obtain the next result:

\begin{theorem}
    If $n \rightarrow \infty$ with $n_1/n \rightarrow \tau, 0 < \tau < 1$, then
    \begin{equation*}
        n^{-1/2}\left(T_n - k\frac{n_1 (n_1 - 1)} {n - 1}\right) \rightarrow_{\mathcal{D}_f} 	\mathcal{N}\left(0, \sigma^2(\tau, d, |.|)\right),
    \end{equation*}
    \begin{equation*}
        \lim \mathbb{V} \left(n^{-1/2} T_n\right) = \sigma^2(\tau, d, |.|).
    \end{equation*}
\end{theorem}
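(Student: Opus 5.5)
Under $H_0$ we have $T_n \stackrel{d}{=} \tilde T_n$, and $\tilde T_n = L_n$ when $L_n$ is evaluated at the random points $Z_1,\dots,Z_n$. The plan is therefore to work conditionally on the pooled locations and then remove the conditioning. Fix $Z_i=z_i$. Then $L_n$ is the within-group count of the symmetric weight matrix $M=(m_{ij})$, with $m_{ij}=a_{ij}^++a_{ji}^+$ as in the proof of Proposition~\ref{prop:L_moments}, under a uniformly random choice of $n_1$ labels. This is exactly Bloemena's random subgraph statistic.

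\textbf{Step 1: conditional CLT.} I will apply Theorem 4.1.1 of \citet{Bloemena1964SamplingGraph}. It gives asymptotic normality of $(Q-\mathbb{E}Q)/\sqrt{\mathbb{V}Q}$ provided $n_1/n\to\tau\in(0,1)$, the conditional variance is of exact order $n$, and a negligibility condition holds on the row sums and entries of $M$. The entries are bounded, with $m_{ij}\le 2$ and $\sum_j m_{ij}^2\le 2(k+d_i^{(k)})$. The key geometric input is the classical fact that in $\mathbb{R}^d$ a point can be among the $k$ nearest neighbours of at most $kC_d$ other points, where $C_d$ is a kissing-type constant depending only on $d$ and the norm. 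Hence $d_i^{(k)}\le kC_d$ deterministically, so every row sum of $M$ is $O(1)$ uniformly in $n$. This uniform bound should verify Bloemena's Lindeberg-type condition for every configuration $z_1,\dots,z_n$ in general position, which occurs almost surely because densities exist.

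\textbf{Step 2: variance and centering.} Conditionally, the mean is $k n_1(n_1-1)/(n-1)$ by \eqref{eq:E_L}. This does not depend on the $z_i$, so the centering in the theorem is non-random. By the preceding Proposition, $n^{-1}\mathbb{V}(L_n\mid Z)\to\sigma^2(\tau,d,|.|)$ in $P_f$-probability. We need $\sigma^2>0$. Since $\tau\in(0,1)$ and $1+v_\infty^{(k)}\ge 1$, this holds. Step 1 then gives, for almost every realisation,
$\sup_t|\mathbb{P}(n^{-1/2}(L_n-\mathbb{E}L_n)\le t\sqrt{\mathbb{V}(L_n|Z)/n}\mid Z)-\Phi(t)|\to0$. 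Combining this with convergence in probability of the conditional variance, via a subsequence argument, gives conditional convergence to $\mathcal N(0,\sigma^2)$ in probability. Dominated convergence over the bounded conditional distribution functions then yields the unconditional limit for $\tilde T_n$, and hence for $T_n$.

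\textbf{Step 3: unconditional variance.} Apply the law of total variance. Because the conditional mean is deterministic, $\mathbb{V}(n^{-1/2}T_n)=\mathbb{E}[n^{-1}\mathbb{V}(L_n\mid Z)]$. By \eqref{eq:Var_L} this is a bounded function of $c_n^{(k)}$ and $v_n^{(k)}$. Both are uniformly bounded by the degree bound, since $c_n^{(k)}\le (kC_d)^2/k$ and $v_n^{(k)}\le1$. It converges in probability, so bounded convergence gives the limit $\sigma^2$.

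I expect the main obstacle to be Step 1: matching Bloemena's precise regularity hypotheses in Theorem 4.1.1 to the $k$-NN adjacency matrix, together with the measure-theoretic passage from almost-sure conditional statements to the unconditional law. The uniform degree bound is what should make both work. If Bloemena's conditions prove awkward to check, the fallback is a Stein or dependency-graph CLT on $\sum_{i\ne j} m_{ij}I(B_i)I(B_j)$ under sampling without replacement.
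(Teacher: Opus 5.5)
Your proposal is correct and follows the same route as the paper: the theorem is obtained from Bloemena's Theorem 4.1.1, applied conditionally on the pooled locations with $m_{ij}=a_{ij}^++a_{ji}^+$, together with the convergence of the conditional variance in the preceding Proposition. Where the paper gives a one-line citation, you make explicit the non-random centering, the positivity of $\sigma^2$, the unconditioning step, and the law-of-total-variance derivation of $\lim \mathbb{V}(n^{-1/2}T_n)$. You should also note that the exact order-$n$ lower bound on the conditional variance needed in Step 1 holds deterministically, since $c_n^{(k)},v_n^{(k)}\ge 0$ in \eqref{eq:Var_L}.
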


The result above delivers the distribution of $T_n$ under $H_0$.
It allows us to derive the p-values for $T_n$ given its asymptotical normality and analytical expressions for its mean and variance.

\subsection{Consistency}

Now we turn our attention to a one-sided analog of Theorem 4.1 from~\citet{henze1988multivariate} that describes what happens if $f \ne g$.

\begin{theorem}\label{Metric theorem}
    If $n \rightarrow \infty$ with $n_1/n \rightarrow \tau, 0 < \tau < 1$, then
    \begin{equation*}
        \frac{1}{nk}T_n \rightarrow_{P_{f,g}} \tilde{D}(f, g, \tau),
    \end{equation*}
    where
    \begin{equation*}
        \tilde{D}(f, g, \tau) = \int \frac{\tau^2f^2(x)}{\tau f(x) + (1 - \tau)g(x)}dx.
    \end{equation*}
\end{theorem}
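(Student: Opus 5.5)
We follow the strategy of Theorem 4.1 in~\citet{henze1988multivariate}: first identify the limit of the expectation of $T_n/(nk)$, then show the variance tends to zero, and conclude by Chebyshev's inequality.

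\textbf{Reduction to a single reference point.} By exchangeability within the $X$-sample,
\[
\mathbb{E}\Big[\frac{1}{nk}T_n\Big]=\frac{n_1}{nk}\sum_{r=1}^k \mathbb{P}\big(N_r(X_1)\text{ is an }X\text{-point}\big).
\]
We condition on $X_1=x$ and look at the $k$ nearest neighbours of $x$ among the remaining $n-1$ points. These consist of $n_1-1$ points drawn from $f$ and $n_2$ points drawn from $g$.

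\textbf{Pointwise limit.} We use the standard localization fact that the $r$-th nearest-neighbour distance of $x$ tends to $0$ almost surely. This holds at every $x$ in the support of the mixture $\tau f+(1-\tau)g$ where the Lebesgue differentiation theorem applies. Given the neighbour distance, each of the first $k$ neighbours is an $X$-point with conditional probability close to
\[
\frac{(n_1-1)\int_{B} f}{(n_1-1)\int_B f+n_2\int_B g}\to\frac{\tau f(x)}{\tau f(x)+(1-\tau)g(x)},
\]
where $B$ is the shrinking ball. To make this rigorous, one treats the pooled sample as a mixture sample with labels assigned independently given locations, which is the usual Poissonization/mixture argument used by Henze and Schilling. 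By dominated convergence (the integrand is bounded by $1$),
\[
\mathbb{P}\big(N_r(X_1)\in X\text{-sample}\big)\to\int f(x)\,\frac{\tau f(x)}{\tau f(x)+(1-\tau)g(x)}\,dx .
\]
Multiplying by $n_1/n\to\tau$ and averaging over $r$ gives $\mathbb{E}[T_n/(nk)]\to\tilde D(f,g,\tau)$.

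\textbf{Variance.} We expand
\[
\mathbb{V}(T_n)=\sum_{i,i'\le n_1}\sum_{r,r'}\operatorname{Cov}\big(A(N_r(X_i)),A(N_{r'}(X_{i'}))\big).
\]
The diagonal terms contribute $O(n)$. For the off-diagonal terms, we show that the covariance for a fixed pair $X_1,X_2$ tends to $0$. Two arguments support this. First, $X_1$ and $X_2$ are asymptotically far apart relative to their nearest-neighbour radii, so their neighbourhoods are disjoint with probability tending to $1$. Second, on that event the neighbourhood label indicators are asymptotically independent, again via the mixture representation. Hence $\mathbb{V}(T_n)=o(n^2)$, so $\mathbb{V}(T_n/(nk))\to0$, and Chebyshev's inequality yields convergence in probability. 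An alternative to this step is McDiarmid's bounded-differences inequality: moving one point changes $T_n$ by at most a constant times the bounded maximal in-degree of a $k$-NN graph in $\mathbb{R}^d$, which gives concentration directly.

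\textbf{Main obstacle.} The hardest step is the pointwise limit of the neighbour-label probability. The $X$-sample and the $Y$-sample have fixed sizes rather than i.i.d.\ labels, and $X_1$ itself is removed from the pool. I would handle this exactly as in Henze's proof: replace the sample by i.i.d.\ draws from the mixture $\tau f+(1-\tau)g$ with Bernoulli$(\tau)$ labels. The two models are coupled, and they differ in only $O_P(\sqrt n)$ points. Changing one point alters $T_n$ by a bounded amount, so the difference is $o_P(n)$ and does not affect the limit.
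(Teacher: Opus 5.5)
Your proposal is correct and follows the paper's route. The paper also gets the mean from Henze's Lemma 4.2, i.e.\ the local label probability $\tau f(x)/(\tau f(x)+(1-\tau)g(x))$, and makes the variance vanish through the asymptotic factorization $\lim\mathbb{E}[I_1(1)I_2(1)\mid X_1=x_1,X_2=x_2]=\prod_{j=1}^2 \tau f(x_j)/(\tau f(x_j)+(1-\tau)g(x_j))$, which matches your mean--variance--Chebyshev scheme. Your McDiarmid alternative is a valid shortcut the paper does not use: it relies on the bounded in-degree of the $k$-NN graph, the same fact behind Henze's constant $\mathcal{C}$ that the paper invokes.
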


\begin{proof}
    Using the notation $I_j(r)$ (definition (1.3) from~\citet{henze1988multivariate}), the result follows from Lemma 4.2 from~\citet{henze1988multivariate} and the fact that
    \begin{multline*}
        \lim \mathbb{E}\left[I_1(1)I_2(1) | X_1 = x_1, X_2 = x_2\right] = \\
        = \prod_{j=1}^2\frac{\tau f(x_j)}{\tau f(x_j) + (1-\tau) g(x_j)}.
    \end{multline*}
    
    which is given in~\citet{henze1988multivariate} to prove Theorem 4.1. 
\end{proof}

Now we prove a special case of inequality for this asymmetric separation measure $\tilde{D}$: 

\begin{prop}\label{Coinside prop}
    Let $f, g$ be probability density functions on $\mathbb{R}^d$, and let $0 < \tau < 1$. Then
    \begin{equation*}
        \tilde{D}(f, g, \tau) \ge \tau^2.
    \end{equation*}
    Equality holds, if and only if, the probability measures corresponding to $f$ and $g$ coincide.
\end{prop}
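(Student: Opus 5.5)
The plan is to reduce the inequality to Cauchy--Schwarz (equivalently Jensen) with respect to the mixture density
\[
h(x) = \tau f(x) + (1-\tau) g(x),
\]
which is itself a probability density on $\mathbb{R}^d$. On the set $S=\{h>0\}$ we have $\tau f \le h$, and outside $S$ both $f$ and $g$ vanish (since $\tau,1-\tau>0$). So the integrand of $\tilde{D}$ is well defined almost everywhere, with the convention $0/0=0$, and the integral may be taken over $S$. There, write $\tau f = \phi h$ with $\phi = \tau f/h \in [0,1]$. Then
\[
\tilde{D}(f,g,\tau) = \int_S \frac{\tau^2 f^2}{h}\,dx = \int_S \phi^2 h\,dx,
\qquad
\int_S \phi h\,dx = \tau\int f\,dx = \tau .
\]

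For the inequality, apply Cauchy--Schwarz to $\phi\sqrt h$ and $\sqrt h$ in $L^2(S)$:
\[
\tau^2=\Bigl(\int_S \phi h\,dx\Bigr)^2 \le \int_S \phi^2 h\,dx \cdot \int_S h\,dx = \tilde{D}(f,g,\tau),
\]
using $\int_S h\,dx = 1$. Equivalently, $\tilde D-\tau^2 = \int_S(\phi-\tau)^2 h\,dx$, which is the variance of $\phi$ under the probability measure $h\,dx$. I would state this variance identity explicitly, since it makes the equality case transparent. Finiteness is automatic because $\phi\le1$ gives $\tilde D\le\tau$.

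For the equality case, $\tilde D=\tau^2$ holds iff $(\phi-\tau)^2 h=0$ a.e., i.e.\ iff $\phi=\tau$ for $h\,dx$-almost every $x$. On $S$, the condition $\tau f = \tau h$ means $f = h$, and so $(1-\tau)f=(1-\tau)g$, giving $f=g$ a.e.\ on $S$. Off $S$ both densities vanish. Hence $f=g$ Lebesgue-a.e., so the measures coincide. Conversely, if the measures coincide then $f=g$ a.e., so $h=f$ and $\tilde D=\tau^2\int f\,dx=\tau^2$.

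The only delicate point is the measure-theoretic bookkeeping. One has to handle the zero set of $h$ and recognize that ``the measures coincide'' means $f=g$ almost everywhere rather than pointwise. No step is a real obstacle: the core is a one-line Cauchy--Schwarz/variance argument.
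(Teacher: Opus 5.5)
Your proof is correct, and it takes a different route from the paper.

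The paper never analyzes $\tilde D$ directly. It observes that
\[
\tilde D(f,g,\tau)-\tilde D(g,f,1-\tau)=\int(\tau f-(1-\tau)g)\,dx=2\tau-1
\]
and that $\tilde D(f,g,\tau)+\tilde D(g,f,1-\tau)=D(f,g,\tau)$, Henze's symmetric functional. This gives $\tilde D=(D+2\tau-1)/2$. It then imports Henze's Proposition 4.3 ($D\ge\tau^2+(1-\tau)^2$, with equality iff $f=g$), which translates exactly into $\tilde D\ge\tau^2$.

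What the paper's reduction buys is a conceptual statement: the one-sided functional is an affine image of the symmetric one, with $\tilde D-\tau^2=\frac12(D-\tau^2-(1-\tau)^2)$. So the asymmetric test inherits precisely Henze's separation. The cost is that it rests on an external result and leaves two points implicit: the finiteness needed to subtract the two integrals, and the treatment of the zero set of the mixture.

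Your variance identity $\tilde D-\tau^2=\int_S(\phi-\tau)^2h\,dx$ is self-contained and handles the a.e.\ bookkeeping explicitly. It also supplies the bound $\tilde D\le\tau$, which is exactly what justifies the paper's subtraction. Combined with the paper's relation $D=2\tilde D+1-2\tau$, your argument in fact reproves Henze's Proposition 4.3 rather than depending on it.
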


\begin{proof}
    Let us note that
    \begin{multline*}
        \tilde{D}(f, g, \tau) - \tilde{D}(g, f, 1 - \tau) = \\ 
        = \int \frac{\tau^2 f^2(x) - (1-\tau)^2g^2(x)}{\tau f(x) + (1-\tau)g(x)}dx = \\
        = \int \left(\tau f(x) - (1 - \tau)g(x)\right)dx = \\
        = \tau - (1-\tau) = 2\tau - 1
    \end{multline*}
    and
    \begin{equation*}
        \tilde{D}(f, g, \tau) + \tilde{D}(g, f, 1 - \tau) = D(f, g, \tau),
    \end{equation*}
    where $D(f, g, \tau)$ is given in statement of Theorem 4.1 from~\citet{henze1988multivariate}. Therefore:
    \begin{equation*}
        \tilde{D}(f, g, \tau) = \frac{D(f, g, \tau) + 2\tau - 1}{2}.
    \end{equation*}
    Following Proposition 4.3 from~\citet{henze1988multivariate} we have:
    \begin{equation*}
        D(f, g, \tau) \ge \tau^2 + (1 - \tau)^2 \Leftrightarrow \tilde{D}(f, g, \tau) \ge \tau^2
    \end{equation*}
    and equality holds if, and only if, $f$ and $g$ coincide.
\end{proof}

After that the statistical test is defined similarly (see Theorem 4.4 from~\citet{henze1988multivariate}), aside from different formulas for $m(n_1, n_2)$ and $U_{n_1, n_2}$:
\begin{equation*}
    m(n_1, n_2) = \frac{n_1(n_1 - 1)}{n - 1},
\end{equation*}
\begin{multline*}
    U_{n_1, n_2} = k\frac{n_1(n_1-1)n_2}{(n-1)(n-2)(n-3)} \times \\
    \times \left((n_1 - 2)C_n + (n_2-1)\left(1+V_n-\frac{2k}{n-1}\right)\right) \le \\
    \le k\frac{n_1(n_1-1)n_2}{(n-1)(n-2)(n-3)} \times \\
    \times \left((n_1 - 2)(\mathcal{C} + 1)^2 + 2(n_2 - 1)\right) = O_{\mathcal{P}_{f,g}}(n),
\end{multline*}

where $C_n$, $V_n$, $\mathcal{C}$ are defined in (3.1), (3.2), (2.5) from~\citet{henze1988multivariate}, respectively. Using this we can derive (4.8) from~\citet{henze1988multivariate}. The last inequality in the proof of Theorem 4.4 from~\citet{henze1988multivariate} now follows from Theorem \ref{Metric theorem}, (4.8) from~\citet{henze1988multivariate}, Proposition \ref{Coinside prop} and the fact that, as $n \rightarrow \infty, n_1/n \rightarrow \tau$,
\begin{equation*}
    \lim\left(n^{-1}m(n_1, n_2)\right) = \tau^2.
\end{equation*}

The proposed test is defined similarly as in Remark 5.1 from~\citet{henze1988multivariate}:
\begin{remark} \label{test_remark}
    For moderate or large sizes of $n_1, n_2$ we may reject $H_0$ at (approximate) level $\alpha$ if
    \begin{equation*}
        T_n(\mathbf{z}) \ge c_n^*(\mathbf{z}; \alpha),
    \end{equation*}
    where $\mathbf{z}$ and $c_n^*(\mathbf{z}; \alpha) = c_{n,1}^*(\mathbf{z}; \alpha)$ are defined similarly with $m(n_1, n_2)$ and $u_{n_1, n_2}$ now equal to right-hand sides of \eqref{eq:E_L} and \eqref{eq:Var_L}, respectively.
\end{remark}

\section{Experiments}

\subsection{Baselines and evaluation protocol}
\paragraph{Baselines.}
We compare the proposed test \textbf{ASK-NN} against representative baselines from several families of two-sample tests: Hotelling’s \(T^2\) test (\textit{t-test}) as a classical parametric mean-based test; the original Henze’s 1-NN graph symmetry test (\textit{Symmetric})~\citep{henze1988multivariate}; the unbiased Maximum Mean Discrepancy with RBF-kernel, label shuffling and median bandwidth heuristic (\(\mathrm{MMD}_{u}B\)) as a kernel-based two-sample test~\citep{mmdagg}; and Sinkhorn Divergency (\textit{SD}) as the entropy-regularized optimal transport version~\citep{sinkhorn}. For the SD baseline, we use a small entropic regularization coefficient to stabilize the computation and compute the statistic efficiently via Sinkhorn iterations, while preserving a close approximation to the unregularized OT distance.
For the \(\mathrm{MMD}_{u}B\), \textit{SD} and \textit{t-test} baselines, p-values are obtained by the same label-shuffling procedure~\citep{labelshuffling}.

\paragraph{Monte Carlo evaluation.}
For each configuration, we estimate power as the empirical rejection rate over \(R\) independent Monte Carlo repetitions at the significance level \(\alpha = 0.05\). 
Error bars indicate normal-approximation \(95\%\) Monte Carlo confidence intervals,
\[
    \hat p \pm 1.96 \sqrt{\frac{\hat p(1-\hat p)}{R}},
\]
where \(\hat p\) is the empirical rejection rate and \(R\) is the number of repetitions.
\subsection{Computational complexity}
\paragraph{Implementation details.} To study practical computational scaling, we benchmark all methods on synthetic Gaussian inputs. For each configuration, we generate two samples of size \(N\) in dimension \(d\) and measure wall-clock runtime of the statistic computational only, excluding data generation. We consider dimensions \(d \in \{128, 1024\}\) and vary the total sample size from \(512\) to \(32768\) on a powers-of-two grid. Experiments are performed on an NVIDIA A100 GPU after 10 warm-up iterations. Runtime is reported as the median over repeated 20 executions.

\paragraph{Results.} Figure~\ref{fig:computational_complexity} presents the observed runtime scaling of the proposed ASK-NN statistic and competing baseline statistics. \textbf{ASK-NN} exhibits substantially slower runtime growth than the kernel-based \(MMD_{u}\) statistic and the \textit{SD} baseline. Moreover, ASK-NN remains applicable at larger sample sizes than these baselines, which become memory-limited earlier due to their higher memory requirements.

Compared with the symmetric nearest-neighbor statistic, ASK-NN shows similar scaling behavior, with slightly lower runtime than this baseline.
Hotelling's \(T^2\) test exhibits nearly constant runtime in the explored regime, likely because its computation is dominated by fixed GPU overheads rather than arithmetic cost.

Overall, these experiments suggest that nearest-neighbor-based two-sample statistics offer better practical scalability than kernel-based alternatives, while ASK-NN preserves this advantage and remains computationally competitive with the symmetric nearest-neighbor baseline.

\begin{figure*}[t]
    \centering

    \includegraphics[width=0.80\textwidth]{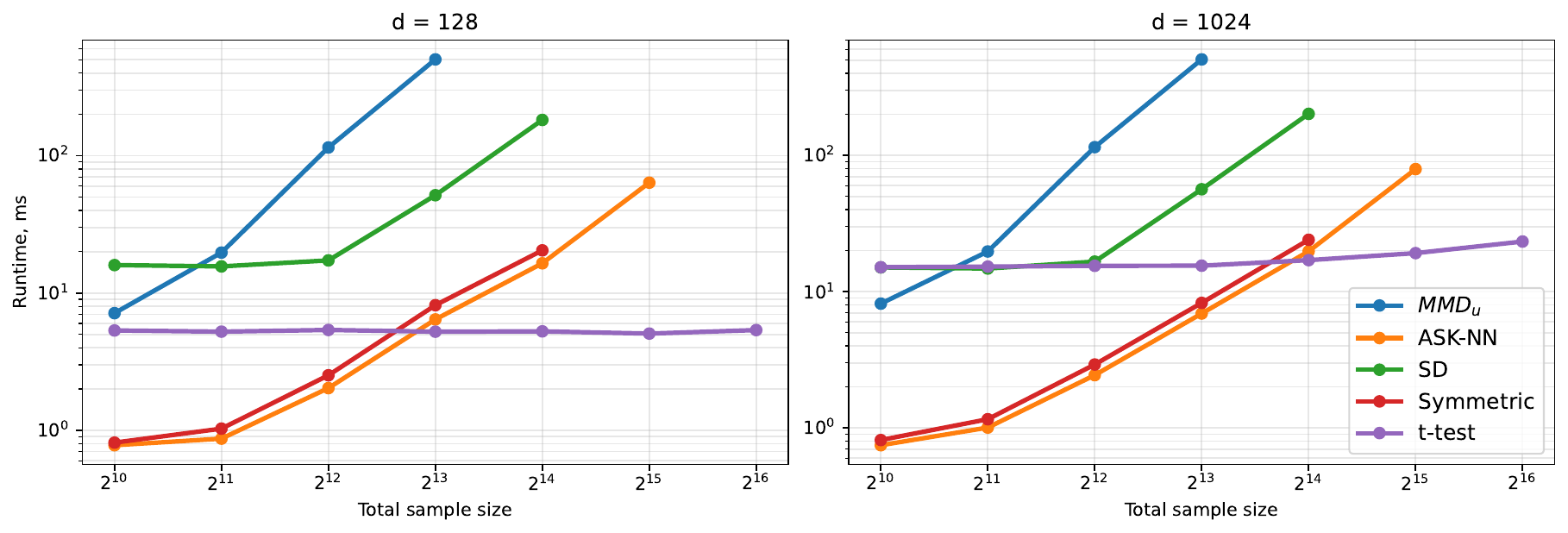}

    \caption{Observed wall-clock runtime scaling of the proposed ASK-NN statistic and baseline two-sample statistics on an NVIDIA A100 GPU for dimensions \(d = 128\) (\textbf{Left}) and \(d = 1024\) (\textbf{Right}). Runtime is measured only for statistic computation, excluding data generation. Both axes use logarithmic scales. Missing points correspond to configurations that exceeded the available GPU memory.}
    \label{fig:computational_complexity}
\end{figure*}
\subsection{Synthetic data}
    
\paragraph{Gaussian benchmarks.}
Similar to~\citet{gretton2012kernel}, we evaluate all tests on two high-dimensional Gaussian two-sample problems. 
In the mean-shift setting, we sample
\(X \sim \mathcal{N}(0, I_d)\) and
\(Y \sim \mathcal{N}(\mu_\delta, I_d)\), where
\(\mu_\delta = \delta \mathbf{1}_d / \sqrt{d}\). 
This choice keeps the Euclidean norm of the mean difference fixed, \(\|\mu_\delta\|_2 = \delta\), independently of the dimension \(d\). 
In the scale-shift setting, we sample
\(X \sim \mathcal{N}(0, I_d)\) and
\(Y \sim \mathcal{N}(0, \sigma^2 I_d)\), where \(\sigma\) controls the variance difference.
We evaluate dimensions ranging from \(2\) to \(2500\).
For the mean-shift experiments, we use 20 logarithmically spaced values of
\(\delta\) in \([0.05, 50]\). 
For the scale-shift experiments, we use 20 logarithmically spaced values of
\(\sigma\) in \([1, 10]\).
For each configuration, we estimate power as the fraction of repetitions in which the null hypothesis is rejected at level \(\alpha = 0.05\). In all experiments, we draw \(n_1=n_2=250\) samples from each distribution.

\paragraph{Results.}
Figure~\ref{fig:gaussian_experiments} demonstrates empirical power under the mean-shift and variance-shift alternatives.\\
In the mean-shift setting, MMD and Hotelling's \(T^2\) test achieve the highest rejection rates. This is consistent with the structure of the alternative: Hotelling's \(T^2\) test directly targets differences in means, while MMD with a characteristic RBF-kernel is sensitive to the induced difference between the two Gaussian distributions. The proposed ASK-NN does not outperform the considered baselines in this setting, but attains nontrivial empirical power, confirming that the statistic remains sensitive to mean-shift alternatives.

In the variance-shift setting, the ranking changes substantially: the proposed asymmetric test achieves the strongest performance among all considered methods and remains highly powerful across the full range of dimensions. 
MMD is the closest competitor and also performs robustly, whereas the Sinkhorn divergence baseline is substantially weaker in higher dimensions. 
Henze's original symmetric 1-NN graph test loses power almost completely, highlighting the advantage of the proposed asymmetric construction for detecting variance changes in high dimensions.


\begin{figure}[t]
    \centering

    \includegraphics[width=0.80\columnwidth]{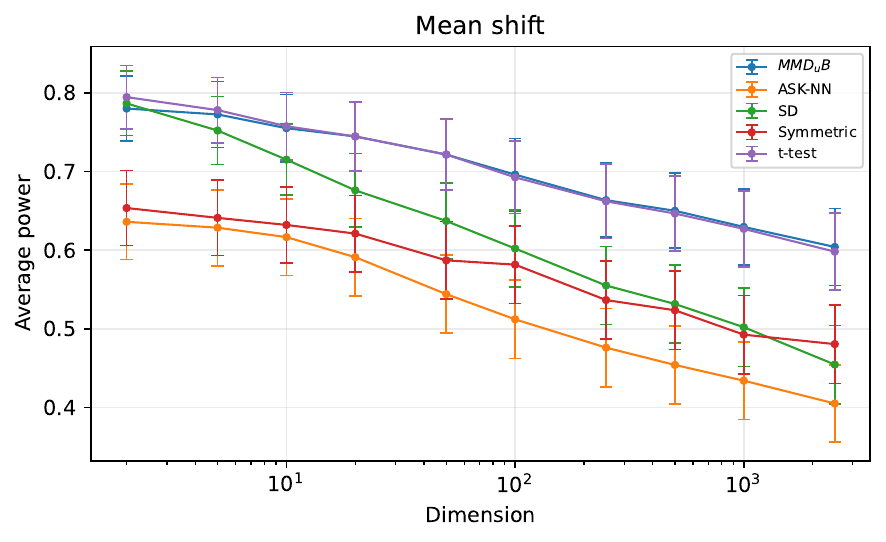}

    \vspace{0.5em}

    \includegraphics[width=0.80\columnwidth]{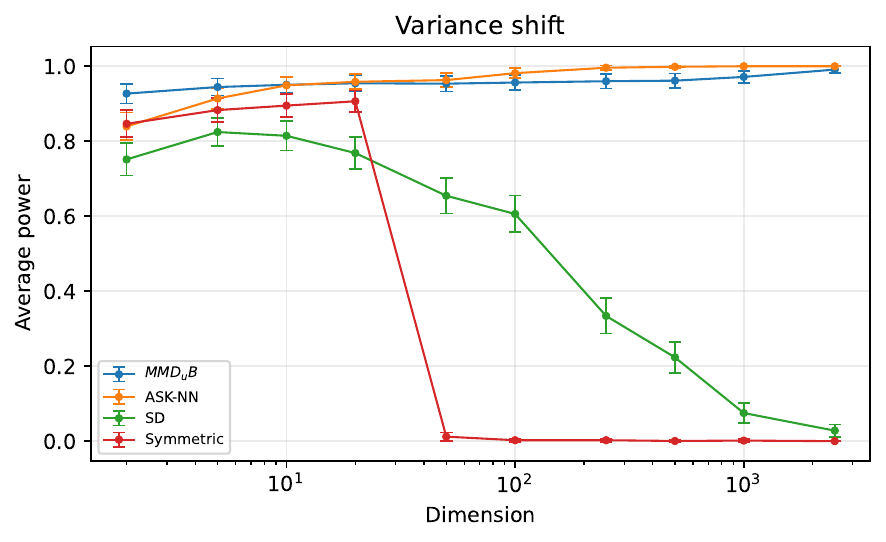}

    \caption{
    Empirical power on synthetic Gaussian two-sample benchmarks at a given significance level \(\alpha = 0.05\).
    \textbf{Top}: Gaussians having the same variance and different means.
    \textbf{Bottom}: Gaussians having the same mean and different variances.}
    \label{fig:gaussian_experiments}
\end{figure}
\subsection{Real problem of Artificial text detection}

\paragraph{Experimental setup.}
We further evaluate the proposed test on artificial-text detection datasets.
We use the RAID dataset~\citep{raid} and consider the subset corresponding to GPT-4 generations in the abstracts domain. 
For each sample size, we draw two independent samples of texts: one from the human-written subset and one from the GPT-4-generated subset. 
Texts are embedded using the multilingual-e5-large sentence encoder~\citep{multilinguale5}, and the resulting embedding samples are compared using the considered two-sample tests. 
This setup evaluates whether the tests can detect the distributional difference between human-written and machine-generated texts in a semantic embedding space.

Since the original asymptotic calibration of the proposed statistic is not sufficiently well-calibrated in this setting, we also evaluate a label-shuffling variant of our test, denoted \textit{Asymmetric (LS)}, where \(p\)-values are obtained by permuting labels in the pooled sample.

\paragraph{Results.}
Figure~\ref{fig:artificial_detection} reports Type II and Type I errors across sample sizes. 
The results show that human-written and GPT-4-generated texts are well separated in the multilingual-e5-large embedding space, with most criteria achieving low Type II error at moderate sample sizes.

The proposed asymmetric statistic is also sensitive to this distributional difference and reaches low Type II error as the sample size increases.
However, the original version requires careful calibration in this setting, as its Type I error may deviate from the nominal significance level on real text embeddings. To address this issue, we additionally evaluate a label-shuffling calibration of our approach (\textit{ASK-NN (LS)}). 
As the results show, this variant keeps Type I error close to the nominal level across the considered sample sizes, while retaining strong power. 

The comparison between the original and label-shuffling variants indicates that the main challenge in this setting is calibration rather than lack of signal: our proposed statistic captures the human-vs-generated difference, while resampling provides reliable Type I error control.

\begin{figure}[t]
    \centering
    \includegraphics[width=0.8\columnwidth]{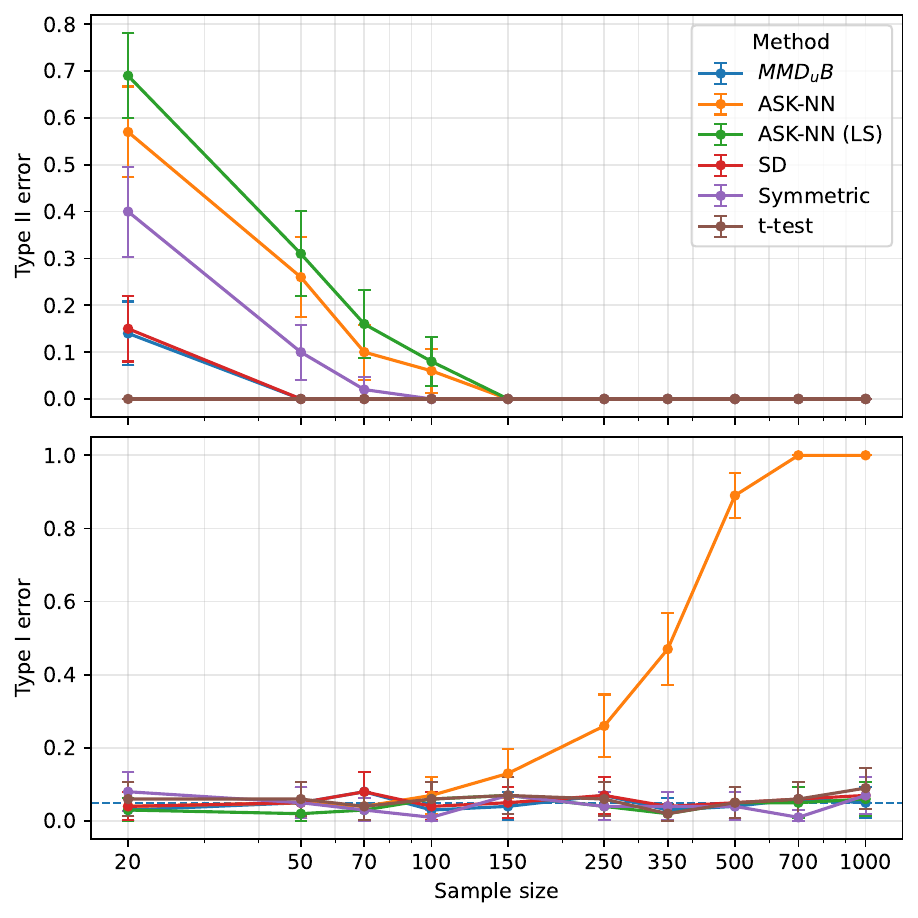}
    \caption{Type II error and Type I error on the RAID artificial-text detection dataset as a function of sample size. The label-shuffling version of the proposed asymmetric statistic maintains Type I error close to the nominal level while retaining strong power.}
    \label{fig:artificial_detection}
\end{figure}

\subsection{Real problem of LLM Hallucination Detection}

\paragraph{Experimental setup.}
We evaluate the proposed test in an applied hallucination detection task following the experimental protocol of~\citet{mmdhalludetection}. 
The task is motivated by RAG, in which hallucinated responses are expected to exhibit weaker or structurally different relationships with the provided context. 
Motivated by this, each prompt-response pair is represented by the token-level hidden states of an LLM, and hallucination detection is formulated as a two-sample problem comparing the hidden-state distributions of prompt and response tokens. 
For a prompt-response pair \(S = (P, R)\), token-level LLM hidden states define two empirical samples,
\(X_P=\{h_i^P\}_{i=1}^{m}\) and \(Y_R = \{h_j^R\}_{j=1}^{n}\), corresponding to prompt and response tokens. 
We compute the proposed asymmetric graph-based statistic for the prompt and response hidden-state samples (\(X_P, Y_R\)) and use it as a hallucination score.

We evaluate our approach and the baselines on a subset of the benchmark considered by~\citet{mmdhalludetection}: long-form QA dataset MS MARCO and summarization CNN/DM + Recent News from RAGTruth~\citep{ragtruth}, and conversational QA benchmark CoQA~\citep{coqa}, using two open-source LLMs: Mistral-7B-Instruct-v0.1 and LLaMA-2-7B-chat.

\paragraph{Results.}
Table~\ref{tab:roc-auc-hallucination} shows ROC-AUC scores for hallucination detection on three datasets and two LLMs. 
The proposed asymmetric statistic achieves the best or second-best result in five out of six settings. 
For Mistral-7B, it matches the best performance on MS MARCO and CNN/DM + Recent News, achieving ROC-AUC scores of \(0.72\) and \(0.62\), respectively, and remains close to the strongest method on CoQA. 
For LLaMA-2-7B, the proposed method obtains the best results on MS MARCO and CoQA, and is second-best on CNN/DM + Recent News.

Compared to the MMD-based criterion, our approach improves performance on most datasets and model backbones, suggesting that the proposed statistic captures prompt-response mismatch in hidden-state space more effectively in this application. 
The gains are particularly clear for LLaMA-2-7B, where the proposed method outperforms MMD on all three datasets. 
Overall, these results indicate that our approach transfers well to a practical hallucination detection scenario, beyond the controlled synthetic alternatives considered above.
\begin{table}[t]
\centering
\caption{ROC AUC ($\uparrow$) of different statistical criteria in the task of hallucination detection for two LLMs. The best results for each model are highlighted in \textbf{bold}, and the second best are \underline{underlined}.}
\label{tab:roc-auc-hallucination}

\begingroup
\resizebox{0.90\columnwidth}{!}{%
\begin{tabular}{l c c c}
\toprule
Method & MS MARCO & \makecell{CNN/DM +\\Recent News} & CoQA\\
\midrule

\multicolumn{4}{c}{Mistral-7B}\\
\midrule

$MMD_{u}B$ &
\metricsecond{0.69}{0.03} &
\metricsecond{0.60}{0.05} &
\metricbest{0.73}{0.03}\\

SD &
\metricplain{0.58}{0.03} &
\metricplain{0.49}{0.05} &
\metricplain{0.62}{0.07}\\

t-test &
\metricplain{0.61}{0.02} &
\metricplain{0.59}{0.07} &
\metricplain{0.57}{0.07}\\

Symmetric &
\metricbest{0.72}{0.03} &
\metricbest{0.62}{0.04} &
\metricsecond{0.70}{0.06}\\

\textbf{ASK-NN (ours)} &
\metricbest{0.72}{0.03} &
\metricbest{0.62}{0.03} &
\metricplain{0.69}{0.07}\\
\midrule

\multicolumn{4}{c}{LLaMA-2-7B}\\
\midrule

$MMD_{u}B$ &
\metricplain{0.64}{0.03} &
\metricplain{0.53}{0.03} &
\metricplain{0.68}{0.04}\\

SD &
\metricplain{0.51}{0.02} &
\metricplain{0.47}{0.03} &
\metricsecond{0.75}{0.03}\\

t-test &
\metricplain{0.60}{0.05} &
\metricplain{0.51}{0.02} &
\metricplain{0.72}{0.06}\\

Symmetric &
\metricsecond{0.65}{0.03} &
\metricbest{0.55}{0.02} &
\metricsecond{0.75}{0.04}\\

\textbf{ASK-NN (ours)} &
\metricbest{0.66}{0.02} &
\metricsecond{0.54}{0.02} &
\metricbest{0.78}{0.03}\\

\bottomrule
\end{tabular}
}
\endgroup

\end{table}

\section{Conclusion}
We introduced an asymmetric multivariate two-sample test based on directed 1-nearest-neighbor coincidences. Unlike symmetric graph-based tests, the proposed statistic treats one sample as a reference distribution and measures how its local neighborhood structure changes when a query sample is introduced. We derived exact finite-sample conditional moments under the permutation null, established asymptotic normality, and proved consistency under fixed alternatives.

Experiments on synthetic benchmarks, artificial-text detection, and LLM hallucination detection show that the proposed statistic is not only theoretically tractable and meaningful on controlled Gaussian alternatives, but also remains competitive on real embedding-based tasks.

Several directions remain open. First, the asymptotic calibration can be inaccurate for real hidden-state embeddings, where dependence, anisotropy, and high dimensionality are substantial; permutation or block-resampling calibration may provide more robust alternatives. Second, extending the theory to dependent token-level samples and high-dimensional regimes would make the method better aligned with LLM applications and improve hallucination detection in practical RAG systems.
\bibliography{refs}

\end{document}